\documentclass{article}

\PassOptionsToPackage{numbers,sort&compress}{natbib}
\usepackage[preprint]{neurips_2026}
\usepackage[utf8]{inputenc} % allow utf-8 input
\usepackage[T1]{fontenc}    % use 8-bit T1 fonts
\usepackage{hyperref}       % hyperlinks
\usepackage{url}            % simple URL typesetting
\usepackage{booktabs}       % professional-quality tables
\usepackage{amsfonts}       % blackboard math symbols
\usepackage{amsmath}
\usepackage{nicefrac}       % compact symbols for 1/2, etc.
\usepackage{microtype}      % microtypography
\usepackage{graphicx}
\usepackage{amsthm}
\usepackage{amssymb}
\usepackage{tabularx}
\usepackage{xcolor}
\title{B3O: Scalable Boltzmann Batch Bayesian Optimization}

\author{%
  Maximilian Bloor\\
  Imperial College London\\
  London, UK\\
  \texttt{max.bloor22@imperial.ac.uk}
  \And
  Liyuan Xu\\
  Secondmind\\
  Cambridge, UK\\
  \texttt{liyuan.xu@secondmind.ai}\\
  \AND
  Hrvoje Stojic\\
  Secondmind\\
  Cambridge, UK\\
  \texttt{hrvoje.stojic@secondmind.ai}\\
  \And
  Victor Picheny\\
  Secondmind\\
  Cambridge, UK\\
  \texttt{victor@secondmind.ai}\\
}

\begin{document}

\maketitle

\begin{abstract}
Modern engineering workflows increasingly rely on massive parallel simulation, driving the need for scalable, large-batch Bayesian Optimization (BO). Existing batch BO methods, however, incur large computational cost or rely on approximations that erode batch diversity. We propose B3O (Boltzmann Batch Bayesian Optimization), a framework that reframes batch generation as a pure sampling problem: drawing samples directly from the Boltzmann distribution defined by the acquisition function avoids the bottlenecks of existing large-batch methods. Theoretically, we prove that queries sampled from this distribution incur only negligible additional regret. Empirically, B3O outperforms existing batch BO methods on standard synthetic benchmarks and adapts robustly across complex applied tasks, including multi-objective electrode design and mixed-variable race car configuration.
\end{abstract}

\newtheorem{assumption}{Assumption}
\newtheorem{theorem}{Theorem}

\section{Introduction}

From simulation-driven engineering design to high-throughput molecular screening, expensive black-box objectives are increasingly evaluated in parallel,   
making batch Bayesian Optimization (BO), where $B$ candidates are queried concurrently per iteration, the regime of practical interest. Fully exploiting this parallelism while preserving sample efficiency is the central challenge as $B$ scales, and existing strategies struggle in complementary ways.

Conventional multi-point acquisitions such as $q$-Expected Improvement \citep{ginsbourger2010kriging} target the small-batch regime: they evaluate high-dimensional expectations over the joint posterior of the $B$ candidates, with each Monte Carlo evaluation requiring an $\mathcal{O}(B^3)$ Cholesky factorization and joint optimization over the $B\cdot d$-dimensional product space, which becomes intractable as $B$ grows. Sequential greedy approximations such as $q$-UCB \citep{wilson2017reparameterization} or local penalization \citep{gonzalez2016batch} sidestep this joint optimization at $\mathcal{O}(B)$ inner-optimization cost, but rely on heuristics that struggle to capture joint uncertainties and produce clustered, suboptimal batches.

Scalable Thompson Sampling (TS) \citep{thompson1933likelihood, vakili2021scalable, wilson2020efficiently, moss2023inducing} sidesteps the joint posterior by optimizing one posterior trajectory per batch element. Practical implementations rely on spectral approximations such as Random Fourier Features \citep{hernandezlobato2014predictiveentropysearch} that suffer from variance starvation \citep{mutny2019efficient}, collapsing the optimized trajectories onto a narrow set of modes, and the decomposition is only compatible with stationary kernels, limiting its applicability. Diversity correctives based on Determinantal Point Processes \citep{kulesza2012DPP, kathuria2016batched, nava2022diversifiedsamplingbatchedbayesian} reintroduce a cost cubic in $B$ and cannot recover modes never visited. These limitations motivate a truly scalable batch BO algorithm, robust across surrogate and acquisition choices, that preserves multimodal exploration of the acquisition surface.

Boltzmann exploration offers a principled alternative: queries are drawn from a density built on a utility function, with a temperature parameter trading off greediness and exploration. The mechanism has strong theoretical backing in discrete multi-armed bandits \citep{cesa2017boltzmann}, but extending it to continuous BO requires sampling from intractable, multimodal acquisition densities.

\citet{garcia2019fully, garciabarcos2025advanced} explored acquisition-density sampling for BO using Metropolis--Hastings and gradient-based MCMC in distributed and small-batch settings. Stein Boltzmann Sampling \citep{serre2025stein} applies Stein Variational Gradient Descent to a Boltzmann density defined over the objective itself, but requires objective gradients and is therefore inapplicable to BO's derivative-free black-box setting.

We introduce B3O, a scalable batch BO framework that recasts batch generation as sampling. Drawing $B$ i.i.d.\ samples from a Boltzmann density on the marginal acquisition makes B3O scale linearly in $B$ while preserving multimodal exploration of the acquisition landscape, sidestepping the scalability--diversity trade-off of joint-posterior and trajectory-based approaches (Table~\ref{tab:comparison_table}). It is also flexible: continuous, constrained multi-objective, and mixed continuous--discrete search spaces are all handled by the same algorithm, swapping the sampler or the acquisition as needed. Our contributions:
\begin{enumerate}
    \item \textbf{Method:} We frame batch generation as sampling from a Boltzmann density built solely from marginal predictive distributions, achieving linear batch-size scaling. We show that both a fixed inverse temperature and a time-varying schedule yield reliable optimization.
    \item \textbf{Theory:} We prove a finite-time cumulative regret bound for sampling from the Boltzmann distribution defined by an UCB acquisition that recovers standard GP-UCB rates up to a negligible additive term.
    \item \textbf{Empirical generality:} We demonstrate the wide applicability of B3O, which is sampler-, surrogate-, and acquisition-agnostic, through empirical experiments across continuous synthetic benchmarks, a constrained multi-objective electrode design problem (by changing the acquisition to expected hypervolume improvement), and a mixed continuous--discrete configuration problem (by changing the sampler to a mixed-space variant of Metropolis--Hastings).
\end{enumerate}

\begin{table}[htbp]
\centering
\caption{Qualitative comparison of B3O with standard Batch BO baselines. $^{\dagger}$Multi-point batch acquisitions such as $q$-EI provide diversity implicitly via the joint posterior, but at a $B \cdot d$-dimensional inner optimization cost.}
\label{tab:comparison_table}
% This line scales the table to the text width
\resizebox{\textwidth}{!}{%
\begin{tabular}{@{} c c c c c c @{}}
\toprule
Method & \begin{tabular}{@{}c@{}}Black-box \\ (no gradients)\end{tabular} & \begin{tabular}{@{}c@{}}Large \\ Batches\end{tabular} & \begin{tabular}{@{}c@{}}Diversity \\ Control\end{tabular} & \begin{tabular}{@{}c@{}}Various BO Settings \\ (e.g. multi-objective)\end{tabular} & \begin{tabular}{@{}c@{}}Model \\ Agnostic\end{tabular} \\ \midrule
Multi-Point Batch Alg. (e.g. $q$-EI) \citep{ginsbourger2010kriging} & \checkmark & \texttimes & (\checkmark)$^{\dagger}$ & \checkmark & \checkmark \\
TS \citep{vakili2021scalable} & \checkmark & \checkmark & \texttimes & \checkmark & \texttimes \\
SBS \citep{serre2025stein} & \texttimes & \checkmark & \checkmark & \texttimes & \texttimes \\
\textbf{B3O (Proposed)} & \checkmark & \checkmark & \checkmark & \checkmark & \checkmark \\ \bottomrule
\end{tabular}%
}
\end{table}

\begin{figure}[t]
    \centering
    \includegraphics[width=0.7\linewidth]{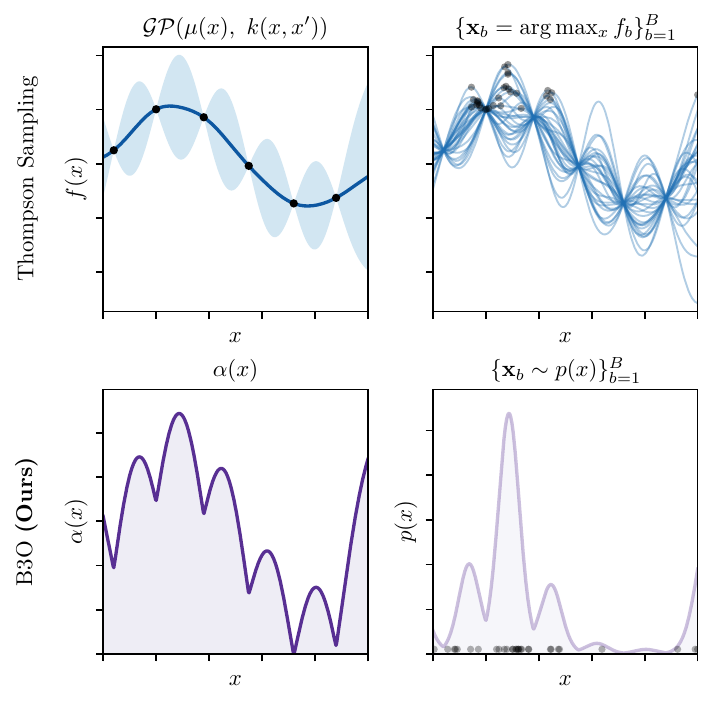}
    \caption{Comparison of TS and B3O for 1D batch generation, sharing the same underlying GP posterior. \textbf{Top row (TS):} (left) the GP posterior; (right) query points obtained by optimizing each sampled trajectory. \textbf{Bottom row (B3O):} (left) the marginal Upper Confidence Bound acquisition; (right) a diverse batch of query points drawn from the corresponding Boltzmann density.}
    \label{fig:b3o}
\end{figure}

\section{Problem Formulation}

We consider sequential optimization of an expensive function $f$ over a compact set $\mathcal{X}\subset \mathbb{R}^d$. At each timestep $t = 1, \ldots, T$, the algorithm selects a batch $\{\mathbf{x}_{t,b}\}_{b\in [B]}$ that is evaluated in parallel, returning noisy rewards $\{y_{t,b} = f(\mathbf{x}_{t,b}) + \epsilon_{t,b}\}_{b\in[B]}$ with observation noise $\epsilon_{t,b}$.

Letting $\mathbf{x}^* \in \arg \max_{\mathbf{x}}f(\mathbf{x})$, the regret over horizon $T$ is the cumulative gap to $f(\mathbf{x}^*)$,
\begin{equation}
    R(T,B;f) = \mathbb{E}\left[\sum_{t=1}^T\sum_{b=1}^B f(\mathbf{x}^*)-f(\mathbf{x}_{t,b})\right],
\end{equation}
where the expectation is over the observation noise and the algorithm's internal sampling randomness. We assume the regularity conditions in Assumption~\ref{ass:regularity}.
\begin{assumption}\label{ass:regularity}
    Given a reproducing kernel Hilbert space (RKHS), the norm of the objective function is bounded: $\|f\|_{H_k}\leq M_{\mathcal{H}}$, for some $M_{\mathcal{H}}>0$, and $k(\mathbf{x},\mathbf{x}')\leq 1$ for all $\mathbf{x}, \mathbf{x}'\in \mathcal{X}$. The observation noise $\epsilon_{t,b}$ is assumed to be zero-mean and $\sigma$-sub-Gaussian, independent across $(t,b)$.
\end{assumption}

\section{Bayesian Optimization Background}
BO is a sequential decision-making strategy for an expensive black-box $f$, combining a probabilistic surrogate (e.g.\ Gaussian Processes~\citep{rasmussen2010gaussian}, Bayesian neural networks~\citep{li2023study}, deep ensembles~\citep{lakshminarayanan2017simple}) with an acquisition function that, when maximized, decides where to sample next.

\subsection{Probabilistic Models}
A surrogate provides a posterior $p(f \mid \mathcal{D}_t)$ over the function space given observed data $\mathcal{D}_t = \{(\mathbf{x}_i, y_i)\}_{i=1}^{N_t}$. For most acquisition functions, we only require the marginal predictives $p(y \mid \mathbf{x}, \mathcal{D}_t)$, characterized by a mean $\mu_t(\mathbf{x})$ and variance $v_t(\mathbf{x})$. The canonical choice is the Gaussian Process (GP) (Figure~\ref{fig:b3o}, top-left), but any model providing marginal predictives (e.g. deep kernel learning, Bayesian neural networks, or ensembles) can be substituted.

For a set of noisy observations $\mathbf{y}_t$, the GP posterior at a point $\mathbf{x}$ is Gaussian, $\mathcal{N}(\mu_t(\mathbf{x}), v_t(\mathbf{x}))$, with predictive mean and variance
\begin{equation}
    \mu_t(\mathbf{x}) = \mathbf{k}_{t}(\mathbf{x})^\intercal (\mathbf{K}_{t} + \sigma_n^2\mathbf{I})^{-1} \mathbf{y}_{t}, \qquad v_t(\mathbf{x}) = k(\mathbf{x}, \mathbf{x}) - \mathbf{k}_{t}(\mathbf{x})^\intercal (\mathbf{K}_{t} + \sigma_n^2\mathbf{I})^{-1} \mathbf{k}_{t}(\mathbf{x}).
\end{equation}

Exact GPs are the standard for uncertainty quantification, but their $\mathcal{O}(N_t^3)$ cost from covariance-matrix inversion becomes a bottleneck as data grows. We therefore use Sparse Variational Gaussian Processes (SVGPs)~\citep{titsias2009variational}, which use $m$ inducing variables $\mathbf{u}$ at locations $\mathbf{Z}_t$ to reduce complexity to $\mathcal{O}(N_t m^2)$, scaling to the large budgets targeted by B3O without sacrificing principled uncertainty, with inducing-point allocation strategies tailored to high-throughput BO~\citep{moss2023inducing}.

\subsection{Acquisition Functions}
The acquisition function $\alpha_t(\mathbf{x})$ maps the posterior to a utility value (Figure~\ref{fig:b3o}, bottom-left), and the next query is $\mathbf{x}_{t+1} \in \arg \max_{\mathbf{x} \in \mathcal{X}} \alpha_t(\mathbf{x})$. Many standard acquisitions are point-wise operators relying solely on marginal predictives. A canonical example is Log Expected Improvement (LogEI) \citep{ament2023unexpected}, a numerically stable reformulation of Expected Improvement \citep{jones1998efficient} that prioritizes regions expected to exceed an incumbent $\eta_t$,
\begin{equation}
    \alpha_{\text{LogEI}}(\mathbf{x}) = \log \mathbb{E}_{f\sim p(f\mid\mathcal{D}_t)}\!\left[\max(0, f(\mathbf{x}) - \eta_t)\right].
\end{equation}
Under noisy observations, the latent best $f(\mathbf{x}^+)$ is unobservable, so $\eta_t$ is a deterministic scalar (e.g.,\ the best posterior mean $\max_{\mathbf{x}\in\mathcal{D}_t}\mu_t(\mathbf{x})$ or the best noisy observation $\max_i y_i$) preserving the marginal-predictive form of the acquisition.
Another canonical choice is the Upper Confidence Bound (UCB) \citep{srinivas2010gaussian}, which explicitly trades off exploration and exploitation through a parameter $\beta_t$,
\begin{equation}\label{eq:ucb}
    \alpha_{\text{UCB}}(\mathbf{x}) = \mu_{t}(\mathbf{x}) + \beta_{t}^{1/2}\sigma_{t}(\mathbf{x}),
\end{equation}
where $\sigma_t(\mathbf{x}) = \sqrt{v_t(\mathbf{x})}$. Selecting the next query requires inner-maximizing $\alpha_t$. While $\alpha_t$ is much cheaper than $f$, its landscape is typically non-convex, multimodal, and contains large flat regions, making global maximization non-trivial.

\subsection{Batch Bayesian Optimization}\label{sec:batch_bo}
At each iteration, $B$ candidates $\{\mathbf{x}_{t,b}\}_{b=1}^{B}$ are selected for parallel evaluation. Existing strategies share an outer structure---one or more maximizations of an acquisition or sampled trajectory---differing in the mechanism for inducing diversity and the inner-optimization cost. We highlight three regimes.

\paragraph{Joint-posterior multi-point acquisitions.}
Methods such as $q$-EI \citep{ginsbourger2010kriging} and $q$-UCB \citep{JMLR:v15:desautels14a}, where $q\equiv B$, build a multi-point acquisition $\alpha^B_t : \mathcal{X}^B \to \mathbb{R}$ from the joint posterior and solve
\begin{equation}\label{eq:qei}
    \{\mathbf{x}_{t,b}\}_{b=1}^{B} \in \arg\max_{(\mathbf{x}_1,\ldots,\mathbf{x}_B)\, \in\, \mathcal{X}^B}\,\alpha^B_t(\mathbf{x}_1,\ldots,\mathbf{x}_B).
\end{equation}
Diversity arises implicitly: correlated predictive variances penalize collocated batches. Two practical costs follow. First, evaluating $\alpha^B_t$ requires Monte Carlo estimation of a $B$-dimensional joint expectation that involves a Cholesky factorization of the $B\times B$ joint covariance, yielding $\mathcal{O}(B^3)$ cost per evaluation. Second, \eqref{eq:qei} maximizes this surface over the $B\cdot d$-dimensional product space $\mathcal{X}^B$, which becomes intractable at the batch sizes we target. Greedy heuristics \citep{ginsbourger2010kriging} replace this with a sequence of single-point maximizations, restoring tractability but reintroducing clustering when the posterior is poorly informed.

\paragraph{Repulsion-based heuristics.}
Local penalization \citep{gonzalez2016batch} keeps the inner optimization single-point and enforces diversity explicitly via a repulsion factor
\begin{equation}
    \mathbf{x}_{t,b} \in \arg\max_{\mathbf{x}\in\mathcal{X}}\;\alpha_t(\mathbf{x})\,\prod_{b'<b}\phi(\mathbf{x},\, \mathbf{x}_{t,b'}),
\end{equation}
where $\phi$ depends on a Lipschitz estimate of $f$. This requires $B$ sequential single-point acquisitions---linear in $B$ but unparallelisable across batch elements---and relies on a Lipschitz estimate that is hard to obtain on multimodal landscapes.

\paragraph{Trajectory sampling methods.}
Scalable Thompson Sampling \citep{vakili2021scalable, wilson2020efficiently, moss2023inducing} draws $B$ independent trajectories $\tilde f^{(b)}_t \sim p(f \mid \mathcal{D}_t)$, $b = 1,\dots,B$, from the surrogate posterior and solves $B$ independent single-point optimizations,
\begin{equation}\label{eq:ts}
    \mathbf{x}_{t,b} \in \arg\max_{\mathbf{x}\in\mathcal{X}}\, \tilde f^{(b)}_t(\mathbf{x}),
\end{equation}
which is parallelisable across $B$ (Figure~\ref{fig:b3o}, top row), but each batch element still requires its own non-trivial optimisation of a typically multimodal trajectory. Diversity is then sample-based: it relies on the trajectories being sufficiently distinct, which in turn requires faithful global posterior samples. For a GP, exact posterior draws are only available at a finite, pre-specified set of inputs, reducing \eqref{eq:ts} to discrete maximisation over that grid and so failing to scale to fine continuous discretisations. To recover continuous-input trajectories, practical implementations approximate each $\tilde f^{(b)}_t$ with a finite-dimensional surrogate---most commonly Random Fourier Features \citep{hernandezlobato2014predictiveentropysearch}, or the pathwise updates of \citet{wilson2020efficiently} that combine prior samples with a Matheron-style data update. Both pipelines rely on a finite linear decomposition of the kernel (e.g.,\ a Bochner-type spectral basis), which under-represents posterior variance away from the data---the variance-starvation phenomenon analyzed by \citet{mutny2019efficient}---so as the budget grows the trajectories tend to share modes even though their inner optimisations are formally independent. Because this decomposition is a property of the GP kernel itself, the same scalable-TS pipeline does not transfer to surrogates without one (e.g.,\ Bayesian neural networks or deep ensembles), restricting scalable TS to GP surrogates with stationary kernels.

B3O replaces the inner optimization altogether with a sampling step on the marginal acquisition (Section~\ref{sec:boltzmann}): multimodal coverage arises directly from the Boltzmann density, without a joint posterior, repulsion penalty, or trajectory sample. Intra-batch diversity is then a probabilistic property of the sampling distribution---ensured by sufficient temperature---rather than a structural guarantee against redundant draws.

%%%%%%%%%%%%%%%%%%%%%%%%%%%%%%%
\section{Boltzmann Distribution}\label{sec:boltzmann}

\paragraph{Boltzmann sampling on a generic utility.} Given a utility function $u : \mathcal{X} \to \mathbb{R}$ and inverse temperature $\lambda > 0$, the Boltzmann (or Gibbs) distribution is
\begin{equation}\label{eq:boltzmann_generic}
    p(\mathbf{x}) \propto \exp(\lambda\, u(\mathbf{x})).
\end{equation}
This construction is foundational in reinforcement learning \citep{sutton1998introduction} and multi-armed bandits \citep{cesa2017boltzmann}. The inverse temperature $\lambda$ interpolates between two extremes: as $\lambda \to 0$ the density approaches uniform and samples ignore $u$; as $\lambda \to \infty$ it concentrates at the global maximizer of $u$. Changing this parameter $\lambda$ therefore tunes the trade-off between diversity and exploitation; see Figure~\ref{fig:lambda_effect}.

\begin{figure}
    \centering
    \includegraphics[width=0.9\linewidth]{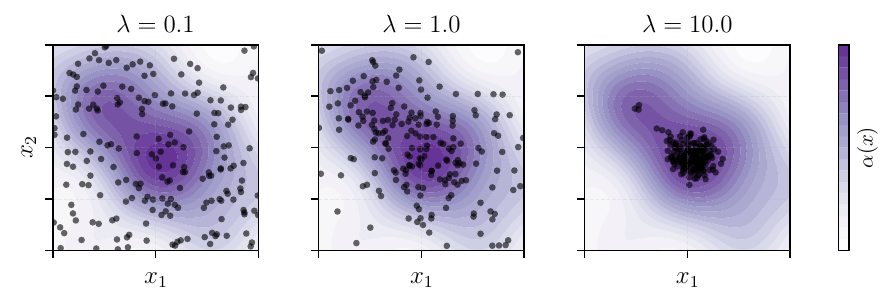}
    \caption{Impact of inverse temperature $\lambda$ on Boltzmann batch sampling across a 2D Upper Confidence Bound acquisition $\alpha(\mathbf{x})$. Black markers denote sampled points. A low $\lambda$ (0.1) promotes diversity, a moderate $\lambda$ (1.0) balances exploration with exploitation, and a high $\lambda$ (10.0) concentrates samples at the maxima.}
    \label{fig:lambda_effect}
\end{figure}

\paragraph{Specialization to batch BO.} B3O instantiates \eqref{eq:boltzmann_generic} with $u = \alpha_t$, the chosen point-wise acquisition (EI, UCB, expected hypervolume improvement, \ldots), and draws $B$ i.i.d.\ samples to form the batch $\mathcal{B}_t = \{\mathbf{x}_{t,b}\}_{b=1}^{B}$ (Figure~\ref{fig:b3o}, bottom row),
\begin{equation}\label{eq:boltzmann_pdf}
    p_t(\mathbf{x}) = \frac{\exp\left( \lambda_t\, \alpha_t(\mathbf{x}) \right)}{Z_t(\lambda_t)}, \qquad Z_t(\lambda_t) = \int_{\mathcal{X}} \exp(\lambda_t\, \alpha_t(\mathbf{x}))\,\mathrm{d}\mathbf{x}.
\end{equation}

\paragraph{Inverse temperature.} We index $\lambda$ with $t$ to allow time-varying schedules: a low initial $\lambda$ encourages exploration when the surrogate is least informed, and growing $\lambda_t$ concentrates sampling on high-acquisition regions as the surrogate converges (Figure~\ref{fig:lambda_effect}). Our analysis (Section~\ref{sec:regret_analysis}) shows that $\lambda_t$ must grow at rate at least $\sqrt{t}\log t$ for B3O-UCB to inherit the GP-UCB rate, identifying increasing temperature as theoretically necessary. In practice, the schedule is not strictly required: a well-chosen constant $\lambda$ is competitive with, and sometimes outperforms, the schedule (Section~\ref{sec:experiments}), so the algorithm is robust to the scheduling of $\lambda_t$.

\paragraph{Properties of the sampling formulation.} Replacing the inner optimization steps of the batch strategies above with a single sampling step has several practical consequences. First, once the sampler is constructed at iteration $t$, generating the batch requires $B$ i.i.d.\ draws, scaling linearly in $B$ (matching scalable TS) without trajectory optimization or DPP post-correction; the per-iteration sampler-construction cost is dominated by repeated $\mathcal{O}(m^2)$ SVGP variance evaluations and is independent of $B$. Second, the density depends on $\alpha_t$ only through point-wise evaluations: B3O is therefore compatible with any acquisition that already relies on marginal predictives---a property of the chosen acquisition rather than of B3O itself. Third, we are moving the algorithmic difficulty from optimization to sampling, not eliminating it: in continuous, smooth, moderate-dimensional domains a global density estimator such as recursive partitioning (\texttt{DEFER}~\citep{bodin2021black}) is well-suited but degrades as $d$ grows, whereas classical MCMC schemes such as Metropolis--Hastings \citep{hastings1970monte, chib1995understanding} or Langevin dynamics \citep{roberts1996exponential} mix slowly on the multimodal acquisition surfaces (Appendix~\ref{app:sampler_ablation}).

\paragraph{Versatility.} The framework is versatile in two distinct ways. (a) Because B3O leaves the surrogate, the acquisition, and the BO loop unchanged, any acquisition designed for a particular regime---e.g., expected hypervolume improvement~\citep{couckuyt2014fast} for multi-objective BO or constrained EI~\citep{gardner2014bayesian} for constrained BO---can be reused without modification. (b) In settings where optimization is intrinsically hard, most prominently mixed continuous--discrete search spaces, sampling can be easier than maximization, so one simply swaps the continuous sampler for a discrete- or mixed-space sampler (e.g., Metropolis--Hastings with a composite proposal) without touching the rest of the pipeline.

As summarized in Table~\ref{tab:comparison_table}, B3O thus delivers a single algorithmic recipe that retains the linear $\mathcal{O}(B)$ scaling of scalable TS while supporting heterogeneous problem types via either acquisition or sampler substitution.

\section{Regret Analysis}\label{sec:regret_analysis}

This section establishes a finite-time cumulative regret bound for B3O paired with the UCB acquisition (\eqref{eq:ucb}); we refer to this instantiation as B3O-UCB. The analysis is for the sequential setting ($B=1$), where each iteration draws a single query from the Boltzmann density rather than maximizing the acquisition; the goal is to show that this sampling step does not meaningfully harm regret compared with greedy maximization. We assume exact sampling from the true Boltzmann distribution $p_t(\mathbf{x})$, together with mild regularity of the input space and a quadratic lower bound on the acquisition curvature near its maximum (Assumptions~\ref{ass:convexity}--\ref{ass:quadratic_decay}, Appendix~\ref{app:proofs}). We do not analyze approximation errors introduced by the sampler or surrogate, nor do we formally extend the bound to general $B$; our empirical evaluations in Section~\ref{sec:experiments} show that practical global samplers approximate $p_t$ well enough at large $B$ to preserve the qualitative behaviour predicted by the sequential analysis.

\paragraph{Proof outline.} The analysis follows \citet{srinivas2010gaussian}'s GP-UCB framework; the only modification handles the gap $\alpha_t(\mathbf{x}_t^*) - \alpha_t(\mathbf{x}_t)$ introduced by sampling rather than maximizing. The Boltzmann density's exponential form admits a Laplace-style argument (Theorem~\ref{thm:prob_bound}, Appendix~\ref{app:proofs}) bounding this gap by $1/\sqrt{t}$ with high probability under an appropriate $\lambda_t$ schedule. Summing the per-step penalty over $T$ contributes an $O(\sqrt{T})$ term to the GP-UCB bound.

\begin{theorem}\label{thm:regret_bound}
In the sequential setting ($B=1$), with high probability, with inverse temperature $\lambda_{t}=O(\sqrt{t}\log t)$, the cumulative regret of B3O-UCB is upper-bounded by
\begin{equation}
    O\left(\sqrt{T}+\sqrt{C_{1}T\beta_{T}\gamma_{T}}\right).
\end{equation}
\end{theorem}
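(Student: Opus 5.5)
The plan follows the GP-UCB template of \citet{srinivas2010gaussian}, with one extra term that accounts for sampling instead of maximizing. First, fix the confidence event. Under Assumption~\ref{ass:regularity}, choose $\beta_t$ as in the RKHS/sub-Gaussian version of GP-UCB (e.g.\ $\beta_t^{1/2}=M_{\mathcal{H}}+\sigma\sqrt{2(\gamma_{t-1}+1+\log(1/\delta))}$). Then with probability at least $1-\delta/2$,
\begin{equation*}
|f(\mathbf{x})-\mu_{t-1}(\mathbf{x})|\le \beta_t^{1/2}\sigma_{t-1}(\mathbf{x})
\end{equation*}
holds simultaneously for all $\mathbf{x}\in\mathcal{X}$ and all $t\ge 1$. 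On this event $\alpha_t(\mathbf{x})\ge f(\mathbf{x})$ everywhere. Writing $\mathbf{x}_t^*=\arg\max\alpha_t$, the instantaneous regret therefore splits as
\begin{equation*}
r_t=f(\mathbf{x}^*)-f(\mathbf{x}_t)\le \underbrace{\alpha_t(\mathbf{x}_t^*)-\alpha_t(\mathbf{x}_t)}_{\text{sampling gap}}+\underbrace{\alpha_t(\mathbf{x}_t)-f(\mathbf{x}_t)}_{\le 2\beta_t^{1/2}\sigma_{t-1}(\mathbf{x}_t)}.
\end{equation*}

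Second, handle the sampling gap with the Laplace-type concentration result, Theorem~\ref{thm:prob_bound}. I will invoke it in this form: under Assumptions~\ref{ass:convexity}--\ref{ass:quadratic_decay}, with $\lambda_t$ of order $\sqrt t\log t$, we have $\alpha_t(\mathbf{x}_t^*)-\alpha_t(\mathbf{x}_t)\le 1/\sqrt t$ with probability at least $1-\delta_t$. The intuition behind it is as follows. The mass of the superlevel set $\{\alpha_t\ge\alpha_t^*-\epsilon\}$ is at least a ball of radius of order $\sqrt{\epsilon}$; this uses the quadratic decay and regularity (cone or interior) condition on $\mathcal{X}$, and requires a Lipschitz constant of $\alpha_t$ that is polynomial in $t$. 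The complement has density at most $e^{\lambda_t(\alpha_t^*-\epsilon)}$ times $\mathrm{vol}(\mathcal{X})$. The failure probability is then at most $\mathrm{vol}(\mathcal{X})\,\epsilon^{-d/2}e^{-\lambda_t\epsilon/2}$, and setting $\epsilon=1/\sqrt t$ with $\lambda_t\gtrsim\sqrt t\log t$ makes this polynomially small in $t$. I then choose the constants so that $\delta_t\le \delta/(2t^2)$, say, so that $\sum_t\delta_t\le\delta/2$. A union bound with the confidence event gives overall probability at least $1-\delta$.

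Third, sum. The gap terms contribute $\sum_{t\le T}t^{-1/2}\le 2\sqrt T$. The remaining terms are bounded exactly as in GP-UCB: by Cauchy--Schwarz, monotonicity of $\beta_t$, and the information-gain lemma $\sum_t\sigma_{t-1}^2(\mathbf{x}_t)\le C_1\gamma_T/\ldots$ with $C_1=8/\log(1+\sigma^{-2})$,
\begin{equation*}
\sum_t 2\beta_t^{1/2}\sigma_{t-1}(\mathbf{x}_t)\le\sqrt{C_1T\beta_T\gamma_T}.
\end{equation*}
This lemma holds for any sequence of queried points, so it does not matter that $\mathbf{x}_t$ was sampled rather than maximized. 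Adding the two contributions gives $O(\sqrt T+\sqrt{C_1T\beta_T\gamma_T})$.

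The main obstacle is the second step: making Theorem~\ref{thm:prob_bound} uniform over $t$. The constants in the Laplace bound (the curvature lower bound, the volume of near-optimal sets near the boundary, the Lipschitz constant of $\alpha_t$, which depends on $\beta_t$) must be controlled independently of $t$, or be at most polynomial in $t$, so that the $\log t$ factor in $\lambda_t$ absorbs them. I would first try to take these as uniform constants from Assumptions~\ref{ass:convexity}--\ref{ass:quadratic_decay}. I would then verify that the resulting $\delta_t$ are summable, possibly by inflating $\lambda_t$ by a constant factor.
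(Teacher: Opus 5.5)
Your proposal is correct and follows the same route as the paper. Both use the decomposition $f(\mathbf{x}^*)-f(\mathbf{x}_t)\le[\alpha_t(\mathbf{x}_t^*)-\alpha_t(\mathbf{x}_t)]+2\beta_t^{1/2}\sigma_{t-1}(\mathbf{x}_t)$ on the confidence event, apply Theorem~\ref{thm:prob_bound} at $\epsilon=1/\sqrt{t}$ with a union bound over $t$ that forces $\lambda_t=O(\sqrt{t}\log t)$, and finish with the information-gain bound on $\sum_t\sigma_{t-1}(\mathbf{x}_t)$; the paper also simply assumes $V_t$ is bounded below uniformly in $t$, which is the obstacle you flag. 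One constant needs fixing: $\sum_{t\ge 1}\delta/(2t^2)=\pi^2\delta/12>\delta/2$, so take $\delta_t=3\delta/(\pi^2t^2)$ instead.
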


Compared with GP-UCB's $O(\sqrt{C_{1}T\beta_{T}\gamma_{T}})$ \citep{srinivas2010gaussian}, B3O-UCB preserves the standard rate up to a negligible additive term: replacing acquisition maximization with Boltzmann sampling does not meaningfully harm regret in the sequential setting. The argument extends naturally to constrained acquisitions (e.g., constrained EI) provided Assumption~\ref{ass:quadratic_decay} holds locally near the constrained optimum; a formal extension to $B>1$ and to constrained acquisitions is left to future work.

\section{Experiments}\label{sec:experiments}
 We first demonstrate large-batch efficiency on multimodal synthetic benchmarks with significant observation noise, comparing against similar large-batch BO algorithms. We then demonstrate robustness to problem type on a real-world multi-objective problem and a mixed-search-space problem. Our implementation uses \texttt{trieste}~\citep{berkeley2021uri} with \texttt{gpflow}~\citep{matthews2017gpflow} for sparse GP modeling. All baselines---batch Expected Improvement ($q$-EI), Local Penalization, scalable Thompson Sampling, sequential Expected Improvement, and sequential Expected Hypervolume Improvement (EHVI)---use default \texttt{trieste} implementations.

\subsection{Synthetic Benchmarks}\label{sec:syn_bench}
We evaluate B3O against two widely used baselines: Sequential Expected Improvement~\citep{jones1998efficient}, a per-sample-efficiency ceiling despite its high cost, and TS~\citep{vakili2021scalable}, the state-of-the-art for large parallel resources. We evaluate B3O with both UCB and Log Expected Improvement (LogEI)~\citep{ament2023unexpected} acquisitions, under constant temperature and the schedule from Section~\ref{sec:regret_analysis}.

We adopt two regimes for $\lambda_t$. The scheduled regime sets $\lambda_t = \lambda_0 \sqrt{t} \log t$ following Section~\ref{sec:regret_analysis}: low $\lambda$ early encourages diversity, growing over time to concentrate on high-acquisition regions. The constant regime fixes $\lambda$ throughout. Because UCB and LogEI have different acquisition magnitudes, the inverse temperature is selected per-function and per-variant from the ablation in Appendix~\ref{app:lambda_ablation}; the values used in the main-text figure are listed in Table~\ref{tab:per_function_lambda}. Suffixes \mbox{\textbf{-c}} and \mbox{\textbf{-s}} denote constant and scheduled variants (e.g., B3O-UCB-s).

We test on three standard BO benchmarks: Shekel-4D, Ackley-5D, and Hartmann-6D. All methods use SVGP surrogates with a Mat\'ern-5/2 kernel; the \texttt{DEFER} sampler~\citep{bodin2021black} generates samples from the Boltzmann distribution. All algorithms are initialised with 100 random points. Batch methods (B3O, TS, Random Search) then use $B=100$ for $T=50$ iterations ($5{,}000$ further queries); Seq-EI is run for $750$ sequential steps with per-evaluation posterior updates as a high-fidelity baseline. Results are averaged over $30$ independent trials.

\begin{figure}[h!]
    \centering
    \includegraphics[width=0.9\linewidth]{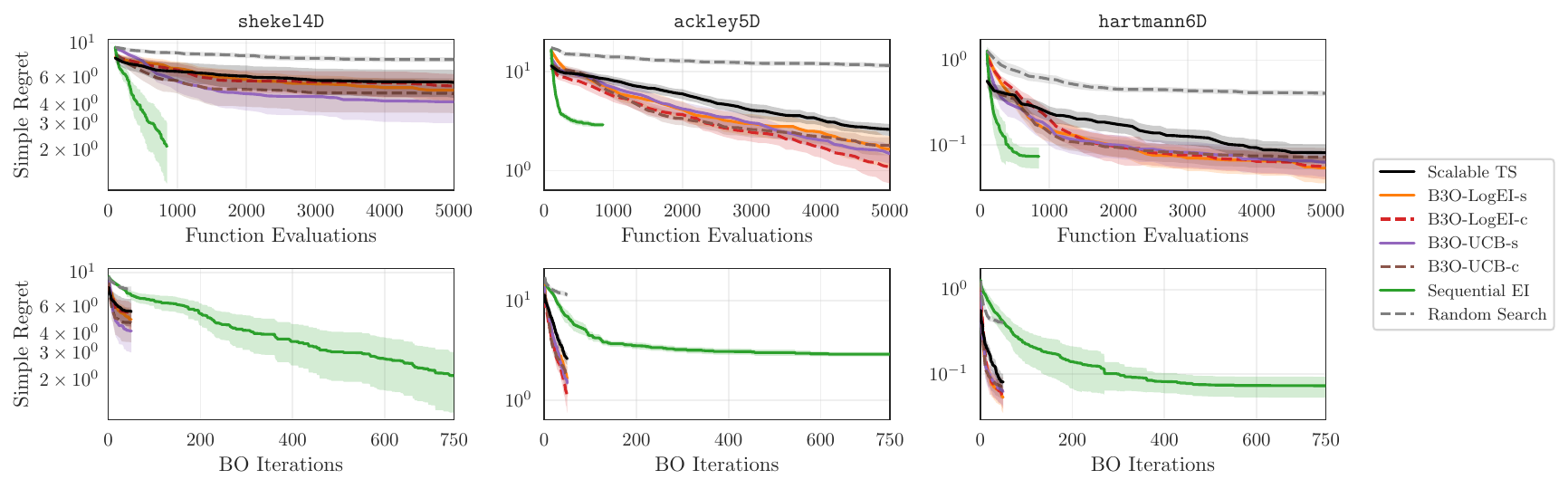}
    \caption{Simple regret on Shekel (4D, left), Ackley (5D, middle), and Hartmann (6D, right). The curves illustrate the performance of B3O using UCB and LogEI with both constant and scheduled temperatures, compared against scalable TS, Sequential Expected Improvement, and Random Search. B3O with LogEI and a constant inverse temperature consistently achieves the lowest regret amongst batch methods on Ackley and Hartmann, while on Shekel the scheduled UCB variant is most effective at escaping the function's deep local minima.}
    \label{fig:synthetic_benchmarking}
\end{figure}

The experimental results in Figure~\ref{fig:synthetic_benchmarking} highlight the efficacy of Boltzmann exploration in large-batch settings. All four B3O variants achieve lower final regret than the scalable TS baseline on every benchmark, and all batch methods clearly outperform Random Search. On Ackley and Hartmann, the batch methods match or outperform Sequential EI, demonstrating efficient use of parallel resources, and LogEI yields the strongest performance among the B3O configurations. Shekel is more challenging for batch methods, Sequential EI achieves the lowest final regret, but among the parallel methods B3O-UCB-s makes the most consistent progress towards the global optimum. Crucially, a constant temperature suffices to outperform TS across all three benchmarks, indicating the algorithm is robust and broadly applicable; practitioners can use a fixed inverse temperature for competitive performance without defining a custom schedule.

To evaluate B3O on smaller batch sizes, allowing comparison to existing small-batch algorithms, we conducted an additional study with $B=5$ on Shekel-4d, Ackley-5D and Hartmann-6D. Figure~\ref{fig:small_batch_investigation} compares B3O (both scheduled and constant temperature regimes) against several batch BO baselines including Thompson Sampling (TS), Local Penalization, and Batch EI ($q$-EI). 

B3O remains competitive in the small-batch regime, particularly on Ackley-5D and Hartmann-6D, where every B3O variant matches or beats $q$-EI and Local Penalization. $q$-EI and Local Penalization underperform on these multimodal benchmarks: the former's $B\cdot d$-dimensional Monte Carlo inner optimization is challenging on noisy SVGP marginals, and the latter's repulsion struggles with the dense local optima of Ackley and Hartmann. Shekel-4D is harder for the parallel methods at $B=5$:  B3O-UCB-s attains the lowest final regret, with Local Penlaization the next-best. B3O-UCB-c is notably weaker than its scheduled counterpart, since a fixed $\lambda$ for UCB cannot adapt to the changing scale as the surrogate improves. The constant LogEI variant remains competitive, indicating that Boltzmann exploration is effective across batch sizes without extensive retuning of $\lambda$.

\begin{figure}[h]
    \centering
    \includegraphics[width=1\linewidth]{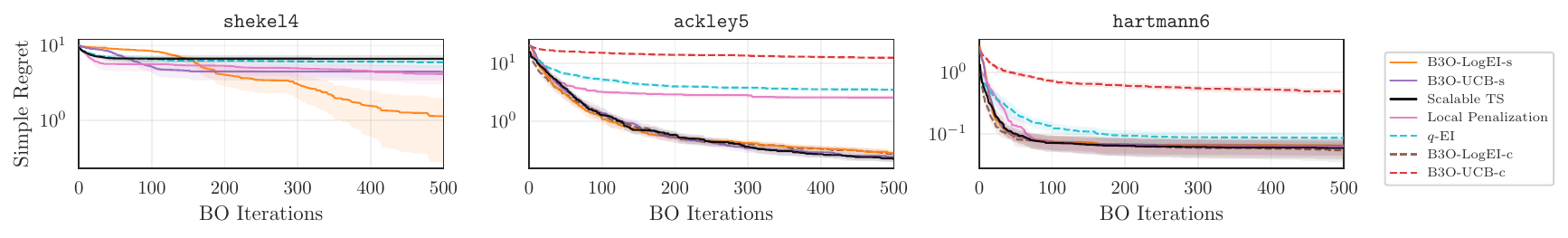}
    \caption{Comparison of B3O variants against batch BO baselines with a small batch size of $B=5$ on Shekel-4D, Ackley-5D and Hartmann-6D benchmarks.}
    \label{fig:small_batch_investigation}
\end{figure}

\subsection{Multi-Objective Battery Electrode Design}

We consider the multi-objective optimization of a lithium-ion battery design, aiming to simultaneously maximize specific energy and specific power~\citep{thebelt2022multi}. The underlying physics are simulated using the Single Particle Model with electrolyte effects via the \texttt{PyBaMM} framework~\citep{sulzer2021python}. The optimization tunes the microstructural and geometric parameters of the cell to manage the trade-offs between capacity and ion transport resistance. The nine decision variables include the porosity, active material volume fraction, and particle radius for both the anode and cathode, alongside the electrode thickness scaling factors. The B3O framework is adapted to this setting by changing the acquisition function to the Expected Hypervolume Improvement~\citep{couckuyt2014fast}.

We compare B3O against two baselines: the Non-dominated Sorting Genetic Algorithm II (NSGA-II)~\citep{deb2002fast}, run for 10,000 iterations to approximate the reference Pareto front, and sequential Expected Hypervolume Improvement (EHVI), run for 600 iterations. The B3O variants (constant and scheduled) are run for 12 iterations with a batch size of $B=50$, matching the 600-function-evaluation budget of the sequential baseline.

\begin{figure}[h]
    \centering
    \includegraphics[width=0.8\linewidth]{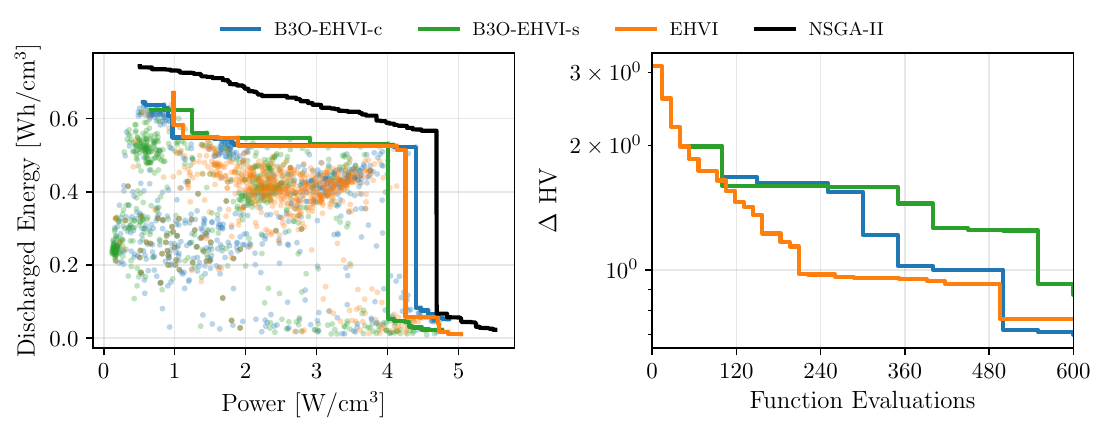}
    \caption{Optimization results for the battery electrode design. Left: Comparison of Pareto fronts identified by NSGA-II (reference), Sequential EHVI, and B3O variants. Right: Hypervolume difference relative to the NSGA-II reference, plotted against total function evaluations (sequential EHVI: $600$ sequential evaluations; B3O: $12$ iterations of $B=50$). B3O achieves similar performance to the sequential baseline in only 12 batch iterations, with the constant-temperature variant yielding superior diversity and final hypervolume.}
    \label{fig:battery_results}
\end{figure}

Figure~\ref{fig:battery_results} (left) illustrates the resulting Pareto fronts: B3O successfully identifies a diverse set of optimal designs that approximate the NSGA-II reference front. Figure~\ref{fig:battery_results} (right) presents the hypervolume difference relative to the NSGA-II reference, demonstrating that B3O achieves a convergence profile comparable to sequential EHVI within the same evaluation budget. Notably, the constant-temperature variant outperforms the scheduled approach, suggesting that a fixed temperature better maintains batch diversity throughout the optimization process and thereby attains a higher final hypervolume.

\subsection{Mixed-Variable Optimization}
To further demonstrate the versatility of the B3O framework in real-world engineering contexts, we apply it to the mixed-variable optimization of a Formula E race car configuration. This problem requires exploring a hybrid search space composed of two discrete decision variables (gear ratio and maximum power setting) alongside four continuous variables (motor torque, drag area, rear downforce, and weight distribution). The objective is to minimize a combined lap-time and energy-use metric on the Shanghai race circuit, utilizing the simulation environment provided by~\citep{Heilmeier2019}. Taking advantage of B3O's sampler-agnostic architecture to accommodate the heterogeneity of the decision space, we simply swap the continuous sampler for a specialized Metropolis-Hastings (MH) sampler, leaving the core BO loop unchanged. This sampler employs distinct proposal mechanisms for the continuous and discrete subspaces (see Appendix~\ref{app:implementation} for implementation details).
\begin{figure}[t]
    \centering
    \includegraphics[width=0.9\linewidth]{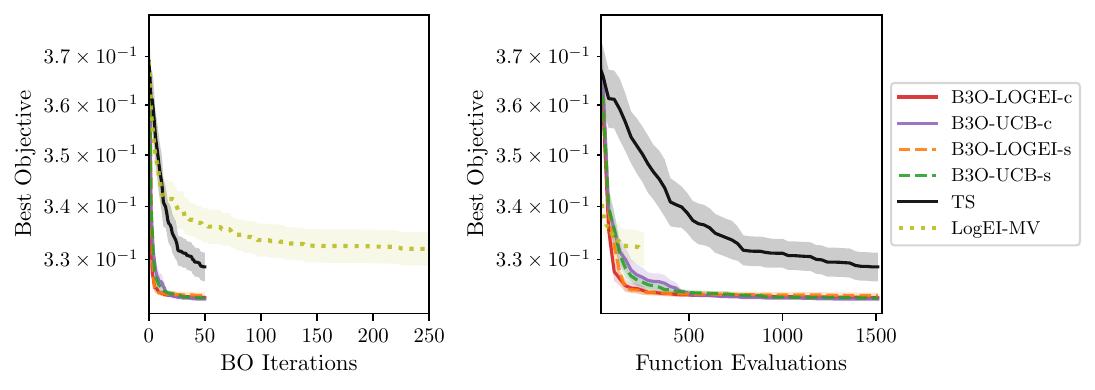}
    \caption{Best objective for the mixed-variable Formula E configuration problem. The left and right panels display performance with respect to optimization iterations and total function evaluations, respectively. B3O variants (LogEI and UCB, with constant and scheduled $\lambda$) are compared against scalable Thompson Sampling (TS) and a sequential mixed-variable LogEI baseline (LogEI-MV). The sequential LogEI-MV baseline terminates at $250$ evaluations, while B3O and TS continue to $1{,}530$ evaluations. Shaded regions represent 95\% confidence intervals.}
    \label{fig:discrete_benchmarking}
\end{figure}

Figure~\ref{fig:discrete_benchmarking} reports the best objective over 30 trials (we report best objective rather than simple regret since the simulator has no analytically known optimum). We compare against two baselines: scalable Thompson Sampling (TS)~\citep{vakili2021scalable} and a sequential mixed-variable LogEI baseline (LogEI-MV) implemented in \texttt{trieste}~\citep{berkeley2021uri}, which random-searches the discrete variables and then gradient-optimizes the continuous ones. All four B3O variants reach a lower final objective than either baseline. Per BO iteration (left), B3O converges in under $20$ iterations while TS needs several hundred to reach a comparable objective; per function evaluation (right), B3O dominates throughout, with TS reaching the B3O plateau only late in the budget, consistent with the diversity collapse of trajectory-based sampling on the hybrid landscape, and LogEI-MV exhausting its sequential budget first.

\section{Conclusion and Future Work}

We introduced B3O, a scalable Batch BO framework that reformulates batch generation as sampling from a Boltzmann density on the marginal acquisition. It requires only a point-wise acquisition and a sampler, leaves the surrogate and BO loop unchanged, and circumvents the variance starvation that limits trajectory-based scalable TS. We proved that this preserves the standard GP-UCB regret rate up to a negligible additive penalty, with strong empirical performance across synthetic benchmarks, multi-objective battery design, and mixed-variable race-car configuration. The formulation also composes naturally with other BO building blocks, e.g.\ trust-region methods \citep{eriksson2019scalable, diouane2023trego}.

The regret analysis (Section~\ref{sec:regret_analysis}) assumes exact sampling and a quadratic lower bound on the acquisition (Assumption~\ref{ass:quadratic_decay}); empirical results suggest the conclusions are robust under sampler approximation. Performance is sensitive to $\lambda$, whose magnitude depends on the acquisition's natural scale (Appendix~\ref{app:lambda_ablation}), motivating an adaptive scheme. Sampler choice also matters, local MCMC mixes poorly at large $B$ and recursive partitioning suits moderate dimensionality (Appendix~\ref{app:sampler_ablation}, and extending to higher-dimensional and structured spaces is a natural next step.

\bibliographystyle{unsrtnat}  % Choose a style (unsrt, plain, alpha, etc.)
\bibliography{references}

\newpage
\appendix
\section{Detailed Proofs for Boltzmann Exploration}\label{app:proofs}

In this section, we provide the proofs for the theoretical guarantees of the Boltzmann exploration strategy. These proofs rely on the assumptions of convexity of the input space $\mathcal{X}$ and the quadratic lower bound of the acquisition function $\alpha_t$.

\begin{assumption}
\label{ass:convexity}
The input space $\mathcal{X}$ is convex and compact, and without loss of generality, the integral $\int_{\mathcal{X}}d\mathbf{x}$ is assumed to equal 1.
\end{assumption}

Additionally, a lower bound on the acquisition function is required to characterize its behavior near the optimum.

\begin{assumption}
\label{ass:quadratic_decay}
For all $t\in[T]$, there exists a positive-semidefinite matrix $A_{t}$ such that the inequality
\begin{equation}
    \alpha_{t}(\mathbf{x})\ge\alpha_{t}(\mathbf{x}_{t}^{*})-\frac{1}{2}(\mathbf{x}-\mathbf{x}_{t}^{*})^{\top}A_{t}(\mathbf{x}-\mathbf{x}_{t}^{*})
\end{equation}
holds for all $\mathbf{x}\in\mathcal{X}$, where $\mathbf{x}_{t}^{*}=\arg \max_{\mathbf{x}\in\mathcal{X}}\alpha_{t}(\mathbf{x})$.
\end{assumption}

Assumption \ref{ass:quadratic_decay} holds for most continuous and smooth functions, such as when $\alpha_{t}(\mathbf{x})$ is Lipschitz continuous or concave, in which case $A_{t}$ can be set as the negative Hessian. Under these conditions, the probability of obtaining samples with small acquisition values can be bounded.

\begin{theorem}
\label{thm:prob_bound}
Let $\mathbf{x}_{t}$ be a sample from the Boltzmann distribution $p_{t}(\mathbf{x})\propto \exp(\lambda_{t}\alpha_{t}(\mathbf{x}))$. Under Assumptions \ref{ass:convexity} and \ref{ass:quadratic_decay}, the probability bound
\begin{equation}
    \mathbb{P}[\alpha_{t}(\mathbf{x}_{t}^{*})-\alpha_{t}(\mathbf{x}_{t})\ge\epsilon]\le \exp\left(-\lambda_{t}\epsilon+\frac{d}{2}\log \lambda_{t}-\log V_{t}\right)
\end{equation}
holds for any $\lambda_{t}\ge1$, where $V_{t}=\int_{\mathcal{X}}\exp(-\frac{1}{2}(\mathbf{x}-\mathbf{x}_{t}^{*})^{\top}A_{t}(\mathbf{x}-\mathbf{x}_{t}^{*}))d\mathbf{x}$.
\end{theorem}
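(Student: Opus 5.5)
The plan is to write the tail probability as a ratio of two integrals of the unnormalized density $\exp(\lambda_t\alpha_t(\mathbf{x}))$. I would bound the numerator from above using the defining event, and the normalizer $Z_t(\lambda_t)$ from below using Assumption~\ref{ass:quadratic_decay}. The $\lambda_t^{d/2}$ factor comes from a change of variables, and convexity (Assumption~\ref{ass:convexity}) keeps the rescaled domain inside $\mathcal{X}$. Throughout I write $\alpha_t^* = \alpha_t(\mathbf{x}_t^*)$ and $q_t(\mathbf{x}) = \frac12(\mathbf{x}-\mathbf{x}_t^*)^\top A_t(\mathbf{x}-\mathbf{x}_t^*)$.

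\emph{Numerator.} Let $S_\epsilon=\{\mathbf{x}\in\mathcal{X}:\alpha_t^*-\alpha_t(\mathbf{x})\ge\epsilon\}$. By \eqref{eq:boltzmann_pdf},
\[
\mathbb{P}[\mathbf{x}_t\in S_\epsilon]=\frac{1}{Z_t(\lambda_t)}\int_{S_\epsilon}\exp(\lambda_t\alpha_t(\mathbf{x}))\,\mathrm{d}\mathbf{x}.
\]
On $S_\epsilon$ the integrand is at most $\exp(\lambda_t(\alpha_t^*-\epsilon))$. Since $\int_{\mathcal{X}}\mathrm{d}\mathbf{x}=1$, the numerator is therefore at most $\exp(\lambda_t\alpha_t^*-\lambda_t\epsilon)$.

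\emph{Normalizer.} Assumption~\ref{ass:quadratic_decay} gives $\alpha_t(\mathbf{x})\ge\alpha_t^*-q_t(\mathbf{x})$ on all of $\mathcal{X}$. Hence
\[
Z_t(\lambda_t)\ge e^{\lambda_t\alpha_t^*}\int_{\mathcal{X}}\exp(-\lambda_t q_t(\mathbf{x}))\,\mathrm{d}\mathbf{x}.
\]
It remains to show that this last integral is at least $\lambda_t^{-d/2}V_t$. I would use the map $\phi(\mathbf{y})=\mathbf{x}_t^*+\lambda_t^{-1/2}(\mathbf{y}-\mathbf{x}_t^*)$.
\begin{itemize}
\item Because $\lambda_t\ge1$, $\phi(\mathbf{y})$ is a convex combination of $\mathbf{x}_t^*$ and $\mathbf{y}$. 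Both points lie in $\mathcal{X}$, so by convexity $\phi(\mathcal{X})\subseteq\mathcal{X}$.
\item The integrand is nonnegative, so restricting the integral from $\mathcal{X}$ to $\phi(\mathcal{X})$ only decreases it.
\item The Jacobian of $\phi$ is $\lambda_t^{-d/2}$, and quadraticity gives $\lambda_t q_t(\phi(\mathbf{y}))=q_t(\mathbf{y})$.
\end{itemize}
Changing variables $\mathbf{x}=\phi(\mathbf{y})$ then gives
\[
\int_{\mathcal{X}}e^{-\lambda_t q_t(\mathbf{x})}\,\mathrm{d}\mathbf{x}\ \ge\ \int_{\phi(\mathcal{X})}e^{-\lambda_t q_t(\mathbf{x})}\,\mathrm{d}\mathbf{x}\ =\ \lambda_t^{-d/2}\int_{\mathcal{X}}e^{-q_t(\mathbf{y})}\,\mathrm{d}\mathbf{y}\ =\ \lambda_t^{-d/2}V_t.
\]

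\emph{Combining.} The factors $e^{\lambda_t\alpha_t^*}$ cancel in the ratio, leaving
\[
\mathbb{P}[\mathbf{x}_t\in S_\epsilon]\le e^{-\lambda_t\epsilon}\,\lambda_t^{d/2}/V_t=\exp\!\left(-\lambda_t\epsilon+\tfrac d2\log\lambda_t-\log V_t\right).
\]
The only nontrivial step is the lower bound on $Z_t$. Integrating a Gaussian over all of $\mathbb{R}^d$ is not available because $\mathcal{X}$ is bounded. The shrink-toward-$\mathbf{x}_t^*$ substitution resolves this, and it is exactly where both the convexity assumption and the condition $\lambda_t\ge1$ are used.
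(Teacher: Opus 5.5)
Your proposal is correct and follows the paper's proof essentially step for step. It writes the tail probability as a ratio of integrals, bounds the numerator using $\int_{\mathcal{X}}\mathrm{d}\mathbf{x}=1$, and lower-bounds the normalizer via Assumption~\ref{ass:quadratic_decay} plus a $\lambda_t^{1/2}$ rescaling about $\mathbf{x}_t^*$. Your shrink map $\phi$ is just the inverse of the paper's substitution $\mathbf{y}=\lambda_t^{1/2}(\mathbf{x}-\mathbf{x}_t^*)$, and your convex-combination argument for $\phi(\mathcal{X})\subseteq\mathcal{X}$ makes explicit the paper's remark that the integration domain expands when $\lambda_t\ge 1$.
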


\subsection{Proof of Theorem \ref{thm:prob_bound}}\label{det_proof:1}

\begin{proof}
Let $S_\epsilon = \{\mathbf{x} \in \mathcal{X} \mid \alpha_t(\mathbf{x}) \leq \alpha_t(\mathbf{x}_t^*) - \epsilon\}$ denote the sub-optimal set. The probability of sampling a point $\mathbf{x}_t$ from the Boltzmann distribution $p_{t}(\mathbf{x})\propto \exp(\lambda_{t}\alpha_{t}(\mathbf{x}))$ such that the acquisition value is at least $\epsilon$ away from the maximum,
\begin{equation}
\mathbb{P}[\alpha_t(\mathbf{x}_t^*) - \alpha_t(\mathbf{x}_t) \geq \epsilon] = \frac{\int_{S_\epsilon} \exp(\lambda_t \alpha_t(\mathbf{x}))\,\mathrm{d}\mathbf{x}}{\int_{\mathcal{X}} \exp(\lambda_t \alpha_t(\mathbf{x}))\,\mathrm{d}\mathbf{x}}.
\end{equation}
By dividing both the numerator and denominator by $\exp(\lambda_t \alpha_t(\mathbf{x}_t^*))$ and defining $\Delta_t(\mathbf{x}) = \alpha_t(\mathbf{x}) - \alpha_t(\mathbf{x}_t^*)$, we obtain
\begin{equation}
\mathbb{P}[\alpha_t(\mathbf{x}_t^*) - \alpha_t(\mathbf{x}_t) \geq \epsilon] = \frac{\int_{S_\epsilon} \exp(\lambda_t \Delta_t(\mathbf{x}))\,\mathrm{d}\mathbf{x}}{\int_{\mathcal{X}} \exp(\lambda_t \Delta_t(\mathbf{x}))\,\mathrm{d}\mathbf{x}}.
\end{equation}
By the definition of $S_\epsilon$, we have $\Delta_t(\mathbf{x}) \leq -\epsilon$ for all $\mathbf{x} \in S_\epsilon$. Thus, the numerator is bounded as
\begin{equation}
\int_{S_\epsilon} \exp(\lambda_t \Delta_t(\mathbf{x}))\,\mathrm{d}\mathbf{x} \leq \int_{S_\epsilon} \exp(-\lambda_t \epsilon)\,\mathrm{d}\mathbf{x} \leq \exp(-\lambda_t \epsilon),
\end{equation}
where the last inequality follows from Assumption \ref{ass:convexity} (specifically $\int_{\mathcal{X}}\,\mathrm{d}\mathbf{x} = 1$). For the denominator, using Assumption \ref{ass:quadratic_decay},
\begin{equation}
\int_{\mathcal{X}} \exp(\lambda_t \Delta_t(\mathbf{x}))\,\mathrm{d}\mathbf{x} \geq \int_{\mathcal{X}} \exp\left(-\frac{\lambda_t}{2}(\mathbf{x} - \mathbf{x}_t^*)^\top A_t (\mathbf{x} - \mathbf{x}_t^*)\right)\,\mathrm{d}\mathbf{x}.
\end{equation}
Applying a change of variables $\mathbf{y} = \lambda_t^{1/2}(\mathbf{x} - \mathbf{x}_t^*)$ with Jacobian $|\mathrm{d}\mathbf{x}/\mathrm{d}\mathbf{y}| = \lambda_t^{-d/2}$
\begin{equation}
\int_{\mathcal{X}} \exp(\lambda_t \Delta_t(\mathbf{x}))\,\mathrm{d}\mathbf{x} \geq \lambda_t^{-d/2} \int_{\{\lambda_t^{1/2}(\mathbf{x}-\mathbf{x}_t^*) \mid \mathbf{x} \in \mathcal{X}\}} \exp\left(-\frac{1}{2}\mathbf{y}^\top A_t \mathbf{y}\right)\,\mathrm{d}\mathbf{y}.
\end{equation}
Given $\lambda_t \geq 1$ and the convexity of $\mathcal{X}$, the integration domain expands as $\lambda_t$ increases,
\begin{equation}
\int_{\mathcal{X}} \exp(\lambda_t \Delta_t(\mathbf{x}))\,\mathrm{d}\mathbf{x} \geq \lambda_t^{-d/2} \int_{\{(\mathbf{x}-\mathbf{x}_t^*) \mid \mathbf{x} \in \mathcal{X}\}} \exp\left(-\frac{1}{2}\mathbf{y}^\top A_t \mathbf{y}\right)\,\mathrm{d}\mathbf{y} = \lambda_t^{-d/2} V_t.
\end{equation}
where $V_t = \int_{\mathcal{X}} \exp(-\frac{1}{2}(\mathbf{x}-\mathbf{x}_{t}^{*})^{\top}A_{t}(\mathbf{x}-\mathbf{x}_{t}^{*}))\,\mathrm{d}\mathbf{x}$. Combining these bounds yields the result for Theorem \ref{thm:prob_bound},
\begin{equation}
\mathbb{P}[\alpha_t(\mathbf{x}_t^*) - \alpha_t(\mathbf{x}_t) \geq \epsilon] \leq \exp\left(-\lambda_t \epsilon + \frac{d}{2} \log \lambda_t - \log V_t\right).
\end{equation}
\end{proof}

\subsection{Proof of Theorem \ref{thm:regret_bound}}\label{det_proof:2}

\begin{proof}
We utilize the GP-UCB regret analysis framework. Let $\mathbf{x}^* = \arg \max f(\mathbf{x})$ and $\mathbf{x}_t^* = \arg \max \alpha_t(\mathbf{x})$. We first show that, with an appropriate inverse temperature schedule, the per-step sampling gap $\alpha_t(\mathbf{x}_t^*) - \alpha_t(\mathbf{x}_t)$ is at most $1/\sqrt{t}$ for all $t \in [T]$ with high probability.

Applying Theorem~\ref{thm:prob_bound} with $\epsilon = 1/\sqrt{t}$, the gap exceeds $1/\sqrt{t}$ with probability at most $\exp(-\lambda_t/\sqrt{t} + (d/2)\log\lambda_t - \log V_t)$. To make a union bound over $t \in [T]$ yield total failure probability at most $\delta$, it suffices that for every $t$,
\begin{equation}\label{eq:lambda_condition}
    \frac{\lambda_t}{\sqrt{t}} \;\geq\; \log\frac{\pi^2 t^2}{6\delta} + \frac{d}{2}\log\lambda_t - \log V_t,
\end{equation}
since $\sum_{t\geq 1} 6\delta/(\pi^2 t^2) \leq \delta$. We split \eqref{eq:lambda_condition} into two sufficient conditions, each absorbing half of the right-hand side:
\begin{equation}\label{eq:lambda_split}
    \frac{\lambda_t}{2\sqrt{t}} \;\geq\; \log\frac{\pi^2 t^2}{6\delta},
    \qquad
    \frac{\lambda_t}{2\sqrt{t}} \;\geq\; \frac{d}{2}\log\lambda_t - \log V_t.
\end{equation}
The first inequality is satisfied by $\lambda_t \geq 2\sqrt{t}\log(\pi^2 t^2/(6\delta))$. For the second, let $\bar\lambda_t$ denote the smallest value satisfying $\bar\lambda_t \geq \sqrt{t}\,(d\log\bar\lambda_t - 2\log V_t)$. Such a $\bar\lambda_t$ exists because the left-hand side grows linearly in $\bar\lambda_t$ while the right-hand side grows only logarithmically, and $\bar\lambda_t = O(\sqrt{t}\log t)$ provided $V_t$ is bounded below by a positive constant (which holds whenever $A_t$ has bounded eigenvalues, true in particular when $\alpha_t$ is Lipschitz; cf.\ the remark following Assumption~\ref{ass:quadratic_decay}). Setting
\begin{equation}\label{eq:lambda_schedule}
    \lambda_t \;=\; \max\!\left(\bar\lambda_t,\; 2\sqrt{t}\log\frac{\pi^2 t^2}{6\delta}\right) \;=\; O(\sqrt{t}\log t)
\end{equation}
ensures both inequalities in \eqref{eq:lambda_split}, hence \eqref{eq:lambda_condition}, holds for all $t$. Note that $\bar\lambda_t$ here is a purely analytic quantity arising from the bound and is distinct from the user-chosen $\lambda_0$ in the practical schedule $\lambda_t = \lambda_0\sqrt{t}\log t$ used in the experiments. The union bound then gives
\begin{equation}\label{eq:gap_bound}
    \mathbb{P}\!\left[\forall t\in[T]:\; \alpha_t(\mathbf{x}_t^*) - \alpha_t(\mathbf{x}_t) \leq 1/\sqrt{t}\right] \;\geq\; 1 - \delta.
\end{equation}

Combining \eqref{eq:gap_bound} with the standard GP-UCB confidence event (which holds with probability at least $1-\delta$~\citep{srinivas2010gaussian}), with probability at least $1-2\delta$, the cumulative regret is bounded as
\begin{align}
\sum_{t=1}^{T} f(\mathbf{x}^*) - f(\mathbf{x}_t) &\leq \sum_{t=1}^{T} \mu_t(\mathbf{x}^*) + \beta_t^{1/2}\sigma_t(\mathbf{x}^*) - f(\mathbf{x}_t) \\
&\leq \sum_{t=1}^{T} \mu_t(\mathbf{x}_t^*) + \beta_t^{1/2}\sigma_t(\mathbf{x}_t^*) - f(\mathbf{x}_t) \\
&\leq \sum_{t=1}^{T} \mu_t(\mathbf{x}_t) + \beta_t^{1/2}\sigma_t(\mathbf{x}_t) + 1/\sqrt{t} - f(\mathbf{x}_t) \\
&\leq 1 + 2\sqrt{T} + \sum_{t=1}^{T} 2\beta_t^{1/2}\sigma_t(\mathbf{x}_t).
\end{align}
Using the fact that $\sum_{t=1}^{T} \sigma_t(\mathbf{x}_t) \leq \sqrt{C_1 T \beta_T \gamma_T}$, the total regret is
\begin{equation}
R_T \leq 1 + 2\sqrt{T} + O(\sqrt{C_1 T \beta_T \gamma_T}).
\end{equation}
This confirms that Boltzmann exploration adds only a negligible additional term to the regret compared to the original GP-UCB analysis~\citep{srinivas2010gaussian}.
\end{proof}
\section{Experiment Details}
Section~\ref{sec:syn_bench} presented results from three synthetic benchmark functions, namely Shekel4, Ackley5 and Hartmann6, all of which were contaminated with Gaussian noise of variance $0.5$. The forms of these functions are given below.

\paragraph{Shekel Function}
The 4-dimensional Shekel function is a benchmark with $m=10$ Gaussian-like wells of varying depth and width, defined on the hyper-rectangle $\mathbf{x} \in [0, 10]^4$:
\begin{equation}
    f(\mathbf{x}) = \sum_{i=1}^{m} \frac{1}{c_i + \sum_{j=1}^{4} (x_j - C_{ji})^2},
\end{equation}
with depth vector $\mathbf{c} = \tfrac{1}{10}(1,2,2,4,4,6,3,7,5,5)^\top$ and well-centre matrix
\begin{equation}
    \mathbf{C} = \begin{pmatrix}
    4 & 1 & 8 & 6 & 3 & 2 & 5 & 8 & 6 & 7 \\
    4 & 1 & 8 & 6 & 7 & 9 & 5 & 1 & 2 & 3.6 \\
    4 & 1 & 8 & 6 & 3 & 2 & 3 & 8 & 6 & 7 \\
    4 & 1 & 8 & 6 & 7 & 9 & 3 & 1 & 2 & 3.6
    \end{pmatrix}.
\end{equation}
The deepest well, located near $\mathbf{x}^* \approx (4,4,4,4)$, contains the global maximum, while the remaining wells form competing local optima.

\paragraph{Ackley Function}
The 5-dimensional Ackley function is a widely used benchmark characterized by a nearly flat outer region modulated by cosine waves and a deep central hole, creating a landscape with thousands of local minima surrounding a single global optimum. This geometry poses a significant challenge for exploration. Defined on the domain $\mathbf{x} \in [-2, 1]^5$, the function is given by:
\begin{equation}
    f(\mathbf{x}) = -20 \exp\left(-0.2 \sqrt{\frac{1}{d} \sum_{i=1}^d x_i^2}\right) - \exp\left(\frac{1}{d} \sum_{i=1}^d \cos(2\pi x_i)\right) + 20 + \exp(1)
\end{equation}
where $d=5$. The global maximum for the negated minimization problem is located at the origin.

\paragraph{Hartmann 6 Function}
The Hartmann 6-dimensional function is defined over the unit hypercube $\mathbf{x} \in [0, 1]^6$. It possesses six local maxima, making it a standard test for multimodal optimization. The function is analytically defined as:
\begin{equation}
    f(\mathbf{x}) = \sum_{i=1}^4 \alpha_i \exp\left(-\sum_{j=1}^6 A_{ij}(x_j - P_{ij})^2\right)
\end{equation}
The coefficients are given by the vector $\boldsymbol{\alpha} = (1.0, 1.2, 3.0, 3.2)^\top$. The matrix $\mathbf{A}$ is defined as
\begin{equation}
    \mathbf{A} = \begin{pmatrix}
    10 & 3 & 17 & 3.5 & 1.7 & 8 \\
    0.05 & 10 & 17 & 0.1 & 8 & 14 \\
    3 & 3.5 & 1.7 & 10 & 17 & 8 \\
    17 & 8 & 0.05 & 10 & 0.1 & 14
    \end{pmatrix},
\end{equation}
and the matrix $\mathbf{P}$ is given by
\begin{equation}
    \mathbf{P} = 10^{-4} \times \begin{pmatrix}
    1312 & 1696 & 5569 & 124 & 8283 & 5886 \\
    2329 & 4135 & 8307 & 3736 & 1004 & 9991 \\
    2348 & 1451 & 3522 & 2883 & 3047 & 6650 \\
    4047 & 8828 & 8732 & 5743 & 1091 & 381
    \end{pmatrix}.
\end{equation}

\subsection{Implementation Details}\label{app:implementation}

\paragraph{Synthetic Experiment} For the synthetic benchmarks in Section~\ref{sec:experiments}, we implement the SVGPs in \texttt{trieste}~\citep{berkeley2021uri} using a likelihood variance of 0.01, 500 inducing points, and 1000 random Fourier features, which follows the implementation used in~\citep{vakili2021scalable}. The acquisition functions are optimized using L-BFGS~\citep{liu1989limited} starting from the best of $2000d$ initial randomly sampled locations. All algorithms tested are initialized with 100 randomly chosen initial points. The synthetic benchmarking functions use a scaled search space of $[0,1]^d$, and their outputs are standardized using a large initial space-filling run.

\paragraph{Boltzmann Algorithm}
We assess the performance of B3O under both fixed and time-varying inverse temperature regimes. For the scheduled variants, we adopt the annealing rate $\lambda_t = \lambda_0 \sqrt{t} \log t$, consistent with the theoretical analysis in Section~\ref{sec:regret_analysis}. We select the inverse temperature per function and per B3O variant by minimum final regret in the lambda ablation of Appendix~\ref{app:lambda_ablation} (specific values listed in Table~\ref{tab:per_function_lambda}). The optimal $\lambda$ depends on both the acquisition scale and the geometry of the objective: scheduled LogEI uniformly favours small $\lambda_0=0.1$ on the smoother Ackley/Hartmann surfaces, while the multimodal Shekel landscape and constant UCB benefit from larger temperatures ($\lambda \in \{5, 10\}$).
\paragraph{Multi-Objective Battery Electrode Design}
The battery optimization problem targets a 9-dimensional continuous search space $\mathcal{X} = [0, 1]^9$ defining microstructural parameters, including porosity, active material volume fractions, and particle radii for both electrodes. To address the simultaneous maximization of specific power and energy, the Boltzmann energy landscape is defined via the Expected Hypervolume Improvement (EHVI) acquisition function.

Physical feasibility is ensured by linear constraints requiring the sum of porosity and active material volume fraction for both the anode and cathode to not exceed $0.95$. For the sequential EHVI baseline, these constraints are explicitly handled by the acquisition optimizer. In the B3O framework, the Boltzmann distribution is initially defined over the unconstrained hyperrectangle. Feasibility is subsequently enforced via rejection sampling: candidate points violating the linear constraints are discarded. This effectively reweights the energy function, assigning zero probability mass to the infeasible region while preserving the relative Boltzmann probabilities within the valid domain.

For the constant temperature regime, we set $\lambda = 1.0$, while the scheduled regime utilizes an initial $\lambda_0 = 10.0$. The optimization proceeds for $T=12$ iterations with a batch size of $B=50$, resulting in a total budget of 600 function evaluations.

\paragraph{Mixed-Variable Optimization}
The Formula E race car optimization problem, based on the simulation environment provided by~\citep{Heilmeier2019}, presents a mixed search space $\mathcal{X} = \mathcal{X}_c \times \mathcal{X}_d$. This space consists of four continuous parameters $\mathcal{X}_c \subseteq \mathbb{R}^4$ (torque, drag area, rear downforce, weight distribution) and two categorical parameters $\mathcal{X}_d$ (gear ratio and maximum power) with five choices each.

To handle this heterogeneity, we cannot rely on the recursive partitioning sampler used in the continuous experiments. Instead, we implement a specialized Metropolis-Hastings (MH) sampler to draw diverse batches from the Boltzmann distribution defined over the mixed domain. The sampler utilizes a composite proposal distribution $q(\mathbf{x}' | \mathbf{x})$. For the continuous subspace, we employ a random walk proposal with Gaussian noise:
\begin{equation}
    \mathbf{x}'_c = \mathbf{x}_c + \epsilon, \quad \epsilon \sim \mathcal{N}(0, \sigma^2 \mathbf{I}),
\end{equation}
where the step size $\sigma$ is adapted based on the scale of the bounds. For the discrete subspace, the proposal mechanism samples uniformly over the categories of the respective discrete variables. Due to the symmetric nature of these proposals, the MH acceptance probability simplifies to the ratio of the Boltzmann densities:
\begin{equation}
    A(\mathbf{x}, \mathbf{x}') = \min\left(1, \frac{\exp(\lambda_t \alpha_t(\mathbf{x}'))}{\exp(\lambda_t \alpha_t(\mathbf{x}))}\right).
\end{equation}

For the Boltzmann distribution parameters, we evaluate B3O using both scheduled and constant temperature regimes. In the scheduled setting, we utilize an initial inverse temperature of $\lambda_0 = 0.1$, while for the constant regime, we fix $\lambda = 1.0$. The surrogate model is an SVGP with a Mat\'ern-5/2 kernel. The discrete variables are handled via integer mapping within the kernel, while the sampler ensures that only valid discrete configurations are proposed for evaluation. The optimization is initialized with $N_{\text{init}}=30$ random points. We perform $50$ iterations with a batch size of $B=30$. The MH sampler runs for 1,000 burn-in steps with a thinning factor of 10 to ensure chain mixing and independence between samples within the batch. We evaluate B3O using both LogEI and UCB acquisition functions, comparing against a standard sequential Expected Improvement baseline that optimizes the continuous relaxation of the discrete variables.

\section{Additional Experiments}

\subsection{Lambda Ablation}\label{app:lambda_ablation}

To assess the robustness of the proposed framework, we conduct an ablation study on the inverse temperature parameter $\lambda$, which governs the diversity of the batch. We evaluated the algorithm's sensitivity across a range of values $\lambda \in \{0.1, 1, 5, 10\}$ for both constant and scheduled regimes on Shekel-4D, Ackley-5D and Hartmann-6D. Figure~\ref{fig:constant_lambda_ablation} and Figure~\ref{fig:scheduled_lambda_ablation} illustrate the performance impact of these configurations.

The results highlight that the constant temperature setup is generally robust, yielding competitive regret minimization across a broad range of hyperparameters. However, the efficacy of specific values remains tied to the scale of the acquisition function. For UCB, which typically exhibits smaller scales, lower constant values (e.g., $\lambda=0.1$) result in a distribution that is too uniform. This leads to excessive diversity, preventing the algorithm from effectively exploiting high-utility regions. Conversely, for LogEI in the scheduled setting ($\lambda_t = \lambda_0 \sqrt{t} \log t$), initializing with high values ($\lambda_0 \in \{5, 10\}$) causes the distribution to contract too rapidly. This mimics greedy maximization, restricting batch diversity and increasing the risk of getting trapped in local optima. These findings suggest that while a constant $\lambda$ is a safe default, the temperature range must be sufficient to maintain distinct modes in the acquisition landscape without flattening them into noise.

The main-text large-batch figure (Figure~\ref{fig:synthetic_benchmarking}) reports B3O with the per-function best $\lambda$ identified by this ablation, summarised in Table~\ref{tab:per_function_lambda}. The smoother Ackley-5D and Hartmann-6D benchmarks favour low scheduled LogEI initialisations ($\lambda_0=0.1$) and high UCB temperatures ($\lambda \in \{5, 10\}$), whereas the multimodal Shekel-4D landscape requires a moderate-to-high $\lambda$ across all four B3O variants ($\lambda \in \{1, 5\}$): too low a temperature flattens the acquisition modes into a near-uniform distribution, while too high a $\lambda_0$ on the scheduled UCB collapses early to a poor mode. We report these per-function values rather than a single global setting because the lambda sweep above documents the search space and shows that the choice in each cell is the highest-performing configuration available, not a hand-tuned outlier.

\begin{table}[h]
\centering
\caption{Best inverse temperature $\lambda$ per function for each B3O variant, selected by minimum final simple regret in the lambda ablation. These are the values used in the main-text large-batch figure (Figure~\ref{fig:synthetic_benchmarking}).}
\label{tab:per_function_lambda}
\begin{tabular}{@{} l c c c c @{}}
\toprule
Function & B3O-LogEI-s ($\lambda_0$) & B3O-LogEI-c ($\lambda$) & B3O-UCB-s ($\lambda_0$) & B3O-UCB-c ($\lambda$) \\
\midrule
\texttt{shekel4D}    & $5$   & $1$   & $1$  & $5$  \\
\texttt{ackley5D}    & $0.1$ & $1$   & $5$  & $10$ \\
\texttt{hartmann6D}  & $0.1$ & $0.1$ & $5$  & $10$ \\
\bottomrule
\end{tabular}
\end{table}

\begin{figure}[h]
    \centering
    \includegraphics[width=0.9\linewidth]{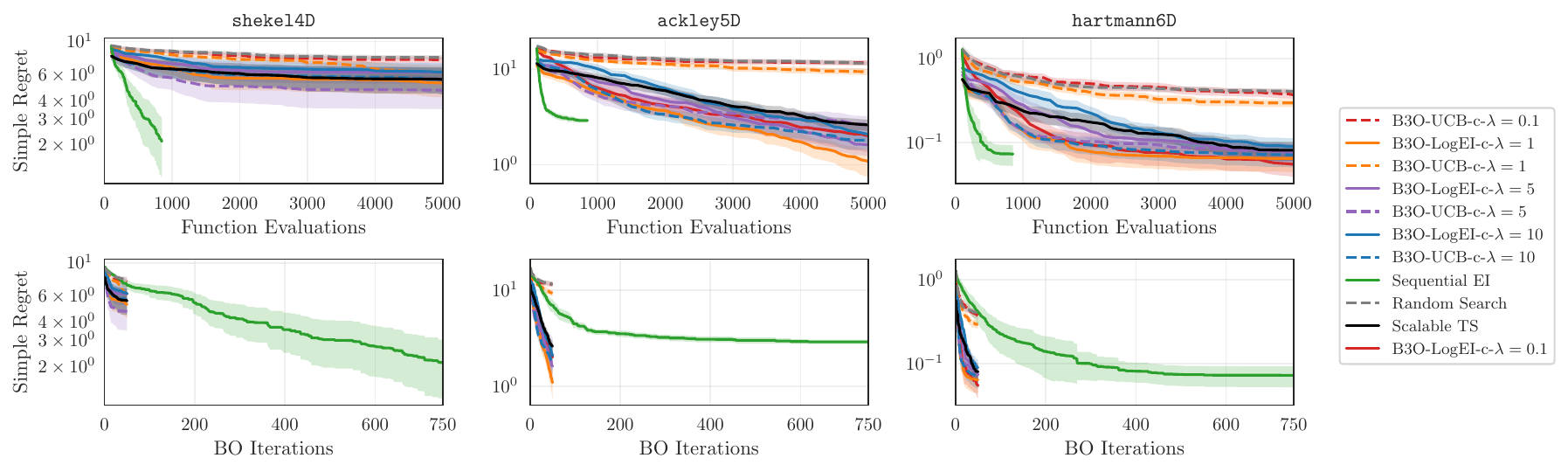}
    \caption{Ablation of \textbf{constant }inverse temperature $\lambda$ on simple regret. The constant setup proves robust, though UCB performance degrades at lower constant lambdas ($\lambda \in \{0.1, 1\}$), where the resulting high-temperature distribution induces excessive diversity within the batch.}
    \label{fig:constant_lambda_ablation}
\end{figure}

\begin{figure}[h]
    \centering
    \includegraphics[width=0.9\linewidth]{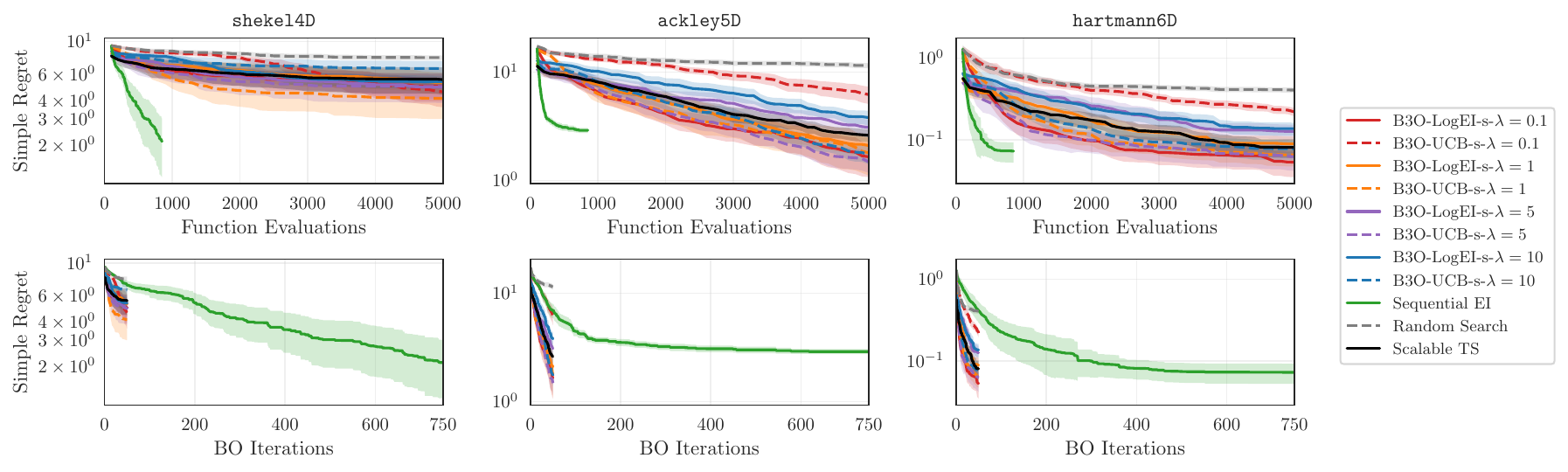}
    \caption{Ablation of the \textbf{initial lambda parameter} $\lambda_0$ for the time-varying temperature scheme. While generally effective, LogEI deteriorates with large initial values ($\lambda_0 \in \{5, 10\}$), where the high energy barrier restricts diversity and forces premature exploitation.}
    \label{fig:scheduled_lambda_ablation}
\end{figure}

%%%%%%%%%%
\section{Sampler Ablation}\label{app:sampler_ablation}

To evaluate the influence of the sampling mechanism on the efficiency of B3O, we conducted an ablation study across four distinct sampling strategies: Metropolis-Hastings (M-H)~\citep{hastings1970monte, chib1995understanding}, Langevin Dynamics~\citep{roberts1996exponential}, Discretized Monte Carlo (MC), and the Recursive Partitioning (\texttt{DEFER}) sampler~\citep{bodin2021black}. The M-H and Langevin Dynamics configurations evaluated here closely correspond to the families of local samplers employed by \citet{garcia2019fully, garciabarcos2025advanced} for acquisition-density sampling on continuous benchmarks at small batch sizes; this ablation therefore additionally illustrates how those sampler families behave on the multimodal large-batch acquisition surfaces targeted by B3O. To ensure consistency with our main results, we evaluate these samplers using the same inverse temperature configurations as the synthetic benchmarking: $\lambda=0.1$ for Scheduled LogEI (LogEI-s), $\lambda=5$ for Constant UCB (UCB-c), $\lambda=1$ for Constant LogEI (LogEI-c), and $\lambda=1$ for Scheduled UCB (UCB-s). Across both the Ackley-5D and Hartmann-6D benchmarks, the Recursive Partitioning sampler consistently achieves the lowest simple regret (Figure~\ref{fig:sampler_ucb_const}, Figure~\ref{fig:sampler_ucb_sched}), demonstrating superior global mixing compared to MCMC-based methods that often struggle with high energy barriers between isolated modes. While LogEI generally provides more stable convergence across all samplers (Figure~\ref{fig:sampler_logei_const}, Figure~\ref{fig:sampler_logei_sched}), the choice of sampler is particularly critical for UCB, where the Defer sampler effectively maps the acquisition landscape to capture significant probability mass in unexplored regions (Figure~\ref{fig:sampler_ucb_const}). Furthermore, the Defer sampler's piecewise constant approximation provides a more reliable global representation of the Boltzmann distribution, leading to tighter confidence intervals and more consistent batch diversity than local samplers like Langevin Dynamics, which show slower convergence even under optimized schedules (Figure~\ref{fig:sampler_ucb_sched}, Figure~\ref{fig:sampler_logei_sched}). This pattern is consistent with the small-batch regime in which these local samplers were originally deployed by \citet{garcia2019fully, garciabarcos2025advanced}: on the larger, multimodal acquisition surfaces studied here, the global coverage afforded by a recursive-partitioning sampler becomes a more decisive factor.
\begin{figure}
    \centering
    \includegraphics[width=0.9\linewidth]{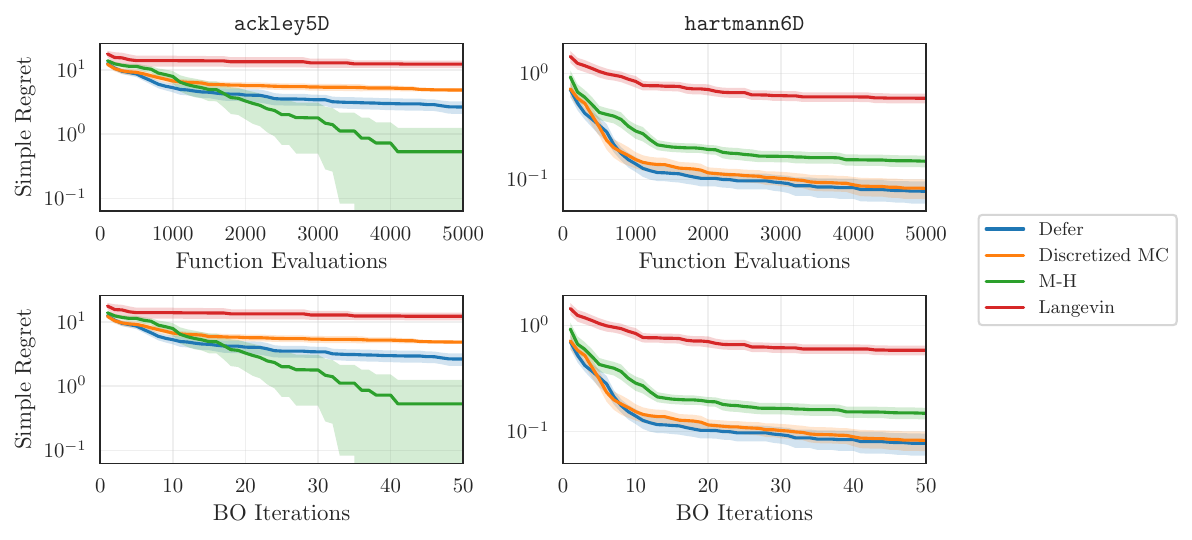}
    \caption{Performance comparison of sampling strategies using the UCB acquisition function with constant $\lambda$=5.}
    \label{fig:sampler_ucb_const}
\end{figure}

\begin{figure}
    \centering
    \includegraphics[width=0.9\linewidth]{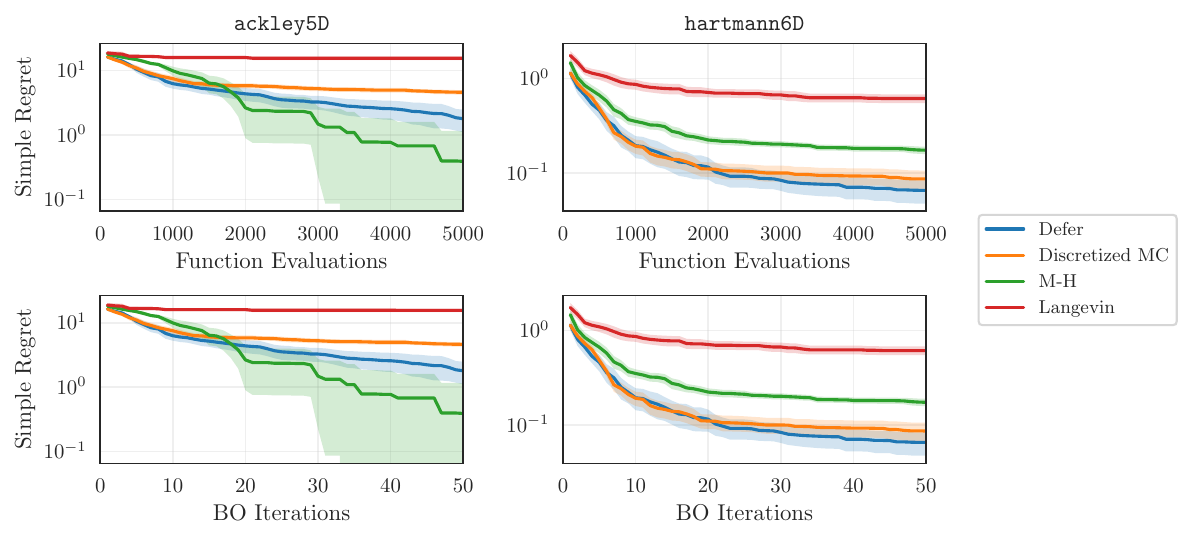}
    \caption{Performance comparison of sampling strategies using the UCB acquisition function with a scheduled inverse temperature starting from $\lambda_0=1$.}
    \label{fig:sampler_ucb_sched}
\end{figure}

\begin{figure}
    \centering
    \includegraphics[width=0.9\linewidth]{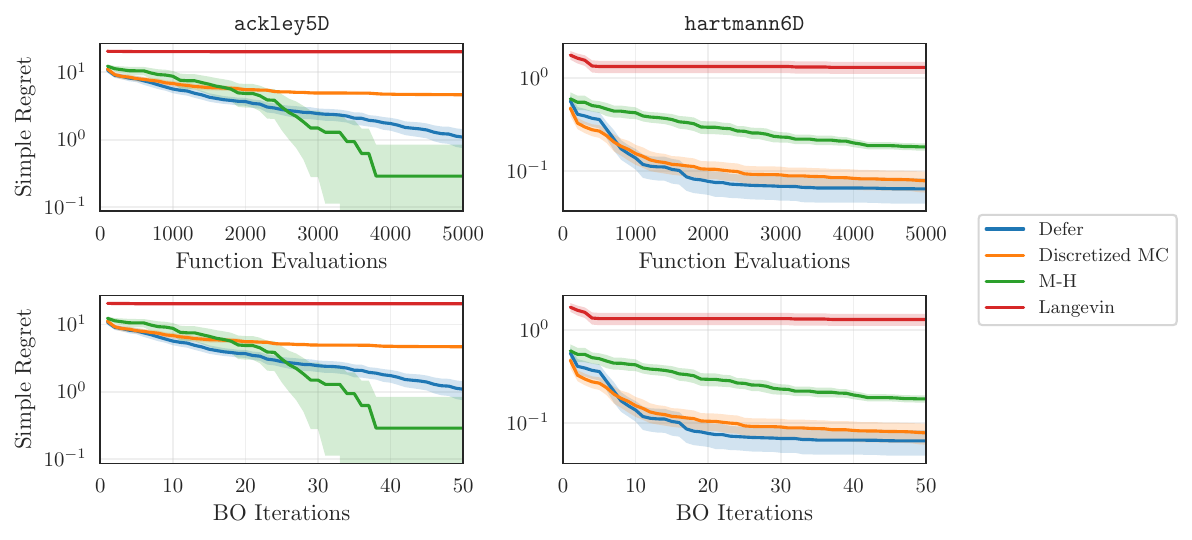}
    \caption{Performance comparison of sampling strategies using the LogEI acquisition function with constant $\lambda=1$.}
    \label{fig:sampler_logei_const}
\end{figure}

\begin{figure}
    \centering
    \includegraphics[width=0.9\linewidth]{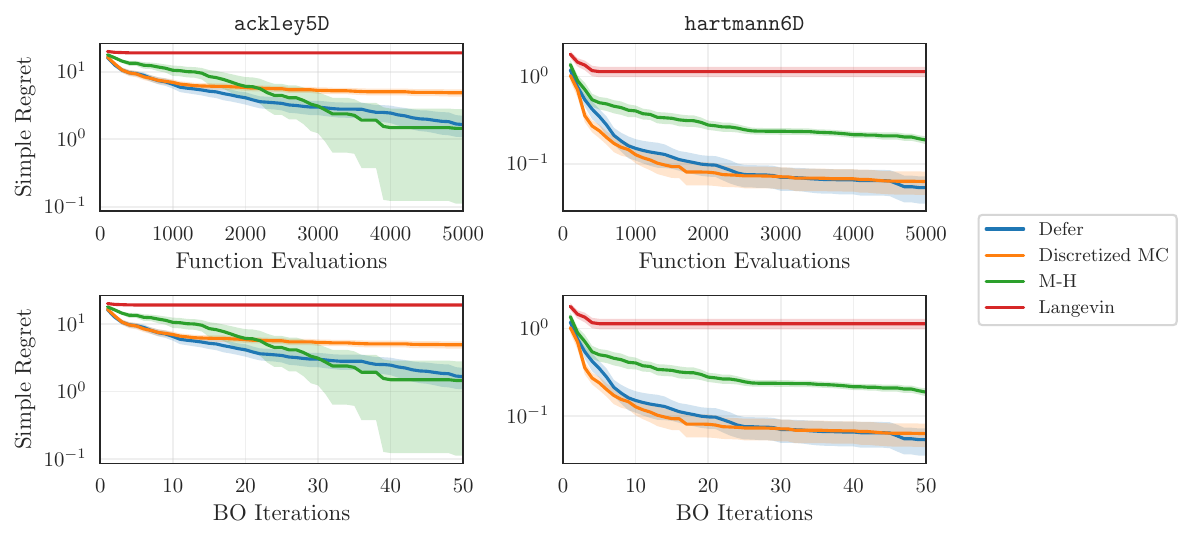}
    \caption{Performance comparison of sampling strategies using the LogEI acquisition function with a scheduled inverse temperature of $\lambda_0=0.1$.}
    \label{fig:sampler_logei_sched}
\end{figure}
\section{Batch Diversity Investigation}

To further characterize the behavior of B3O, we analyze the diversity of the generated batches throughout the optimization process. We define batch diversity as the average pairwise Euclidean distance between all points in a batch, $\text{div}(\mathcal{B}) = \frac{2}{B(B-1)} \sum_{i < j} \|x_i - x_j\|_2$. Figure~\ref{fig:diversity_benchmarking} illustrates this metric for the Shekel-4D, Ackley-5D, and Hartmann-6D benchmarks. We observe that B3O maintains significantly higher batch diversity than the scalable TS baseline across all configurations. Notably, the diversity of TS batches drops rapidly in early iterations, suggesting a tendency for trajectory samples to collapse toward a few dominant modes. In contrast, B3O configurations preserve a more exploration-oriented distribution, with the LogEI variants and scheduled UCB maintaining particularly consistent diversity levels. This sustained exploration likely contributes to the superior regret performance.

\begin{figure}[h!]
    \centering
    \includegraphics[width=0.9\linewidth]{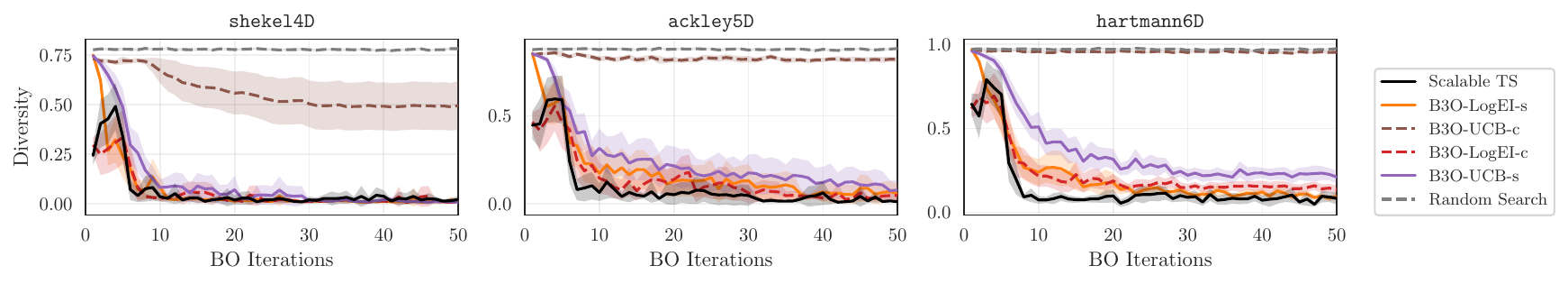}
    \caption{Batch diversity (average pairwise distance) on Shekel (4D, left), Ackley (5D, middle), and Hartmann (6D, right) benchmarks. B3O variants maintain higher diversity throughout the optimization compared to scalable TS, which exhibits rapid diversity loss.}
    \label{fig:diversity_benchmarking}
\end{figure}

\begin{figure}[h]
    \centering
    \includegraphics[width=0.9\linewidth]{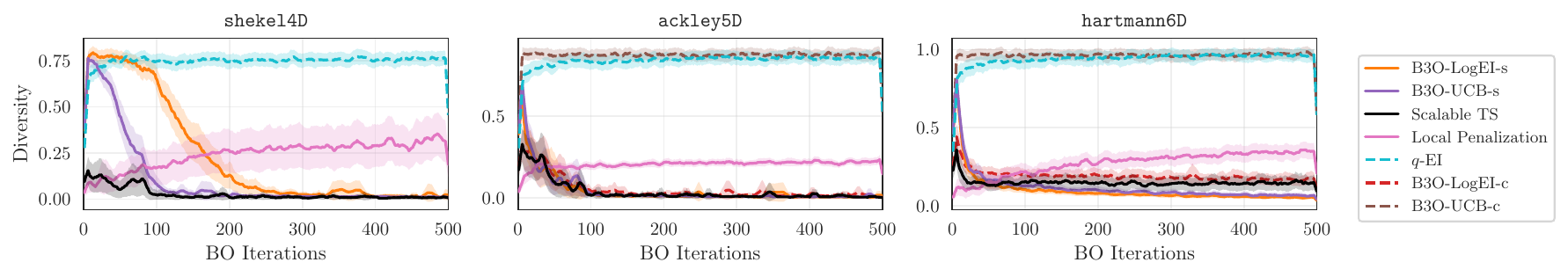}
    \caption{Batch diversity (mean pairwise distance) across BO iterations for B3O variants and batch BO baselines with a small batch size of $B=5$ on Shekel-4D, Ackley-5D and Hartmann-6D benchmarks. Curves are smoothed with a rolling average for clarity.}
    \label{fig:small_batch_diversity}
\end{figure}

% \newpage
% \input{checklist.tex}

\end{document}